\title[Riemann Hypothesis and Neural Networks]{On the Connection Between Riemann Hypothesis\\ and a Special Class of Neural Networks}
\newcommand{\reals}{\mathbb{R}}
\newcommand{\realpart}{\textup{Re}}
\newcommand{\complex}{\mathbb{C}}
\newcommand{\nn}{\mathcal{N}}
\newcommand{\RH}{$\mathcal{RH}$}
\newtheorem{thm}{Theorem}
\begin{document}

\maketitle

\begin{abstract}
The Riemann hypothesis (\RH) is a long-standing open problem in mathematics. It conjectures that non-trivial zeros of the zeta function all lie on the line $\realpart(z) = 1/2$. The extent of the consequences of \RH~is far-reaching and touches a wide spectrum of topics including the distribution of prime numbers, the growth of arithmetic functions, the growth of Euler's totient, etc. In this note, we revisit and extend an old analytic criterion of the \RH~known as the Nyman-Beurling criterion which connects the \RH~to a minimization problem that involves a special class of neural networks. This note is intended for an audience unfamiliar with \RH. A gentle introduction to \RH~is provided.
\end{abstract}

\section{Introduction}
The Riemann hypothesis conjectures that the non-trivial zeros of the Riemann zeta function are located on the line $\realpart(z) = \frac{1}{2}$ in the complex plane $\complex$. This is a long-standing open problem in number theory first formulated by \citep{riemann}. The Riemann zeta function was first defined for complex numbers $z$ with a real part greater than 1 by $\zeta(z) = \sum_{n=1}^\infty \frac{1}{n^z}, z \in \mathbb{C}, \realpart(z)>1. $
However, it is the extension of the zeta function $\zeta$ to the whole complex plane $\complex$ that is considered in the statement of \RH. This extension is called the \emph{analytic continuation} of the zeta function (details are provided in \cref{app:rh_step_by_step}).\\
There is strong empirical evidence that \RH~holds. Recent numerical verification by \cite{platt_rh_true} showed that \RH~is at least true in the region $\{z = a + i b \in \complex: a\in (0,1),b \in (0, \gamma]\}$ where $\gamma = 3\cdot 10^{12}$, meaning that all zeros of the zeta function with imaginary parts in $(0,\gamma]$ have a real part equal to $\frac{1}{2}$. Several other theoretical insights seem to support \RH~;we invite the reader to check \cref{app:rh_step_by_step} for a short summary of relevant results and insights. In this note, we are interested in an specific criterion of the \RH, i.e. an equivalent statement of \RH. This criterion is known as the Nyman-Beurling criterion \citep{nyman, beurling} which states that \RH~holds if and only if a special class of functions is dense in $L_2(0,1)$. This class of functions can be seen as a special kind of neural networks with one dimensional input. In this note, we show that the sufficient condition can be easily extended to $L_2((0,1)^d)$. Specifically, we introduce a new class of neural networks and show that \RH~implies the density of this class in $L_2((0,1)^d)$ for any $d\geq 2$. The necessary condition in general dimension $d\geq 2$ remains an open question.

\section{Riemann Hypothesis}\label{sec:rh}
The Riemann zeta function was originally defined for complex numbers $z$ with a real part greater than $1$ by 
\begin{equation}\label{eq:zeta_function}
\zeta(z) = \sum_{n=1}^\infty \frac{1}{n^z}, \quad z \in \mathbb{C}, \realpart(z)>1. 
\end{equation}
The above definition of Riemann zeta function excludes the region of interest $\{z \in \complex: \realpart(z)=\frac{1}{2}\}$ since the series in \cref{eq:zeta_function} diverge when $|z|<1$. Indeed, \RH~is stated for the an extension of the zeta function on the whole complex plane $\complex$. This extension is called the analytic continuation, and it is unique by the Identity theorem \citep{walz_identity}. To give the reader some intuition of how such extension is defined, let us show how we can extend $\zeta$ to the region $\{z \in \complex: \realpart(z)>0\}$. Observe that the function $\zeta$ satisfies the following identity
$$
(1 - 2^{1-z})\zeta(z) = \sum_{n=1}^\infty \frac{1}{n^z} - 2 \sum_{n=1}^\infty \frac{1}{(2n)^z} =  \sum_{n=1}^\infty \frac{(-1)^{n+1}}{n^z},
$$
where the right hand side is defined for any complex number $z$ such that $\realpart(z)>0$. Using similar techniques, we can show that for any $z\in \complex$ such that $\realpart(z) \in (0,1)$,
\begin{equation}\label{eq:func_eq}
\zeta(z) = 2^z \pi^{z-1} \sin\left(\frac{\pi z}{2}\right) \Gamma(1-z) \zeta(1-z),
\end{equation}
which helps extend $\zeta$ to complex numbers with negative real part. A step by step explanation of the analytic continuation of the $\zeta$ function is provided in \cref{app:rh_step_by_step}.

\paragraph{Zeros of the $\zeta$ function.} From \cref{eq:func_eq}, we have $\zeta(-2k) = 0$ for any integer $k \geq 1$. The negative even integers $\{-2 k \}_{k \geq 1 }$ are thus called \emph{trivial zeros} of the Riemann zeta function since the result follows from the simple fact that $\sin\left( - \pi k \right) = 0$ for all integers $k \geq 1$. The other zeros of $\zeta$ are called non-trivial zeros, and their properties remain poorly understood. The \RH~conjectures that they all lie on a the line $\realpart(z)=\frac{1}{2}$.

\paragraph{Riemann Hypothesis (\RH).} \emph{All non-trivial zeros of $\zeta$ have a real part equal to $\frac{1}{2}$}.\\
Whether \RH~holds is still an open question. The consequences of the Riemann hypothesis are various (see \cref{app:rh_step_by_step}) and numerous equivalent results exist in the literature. In the next section, we re-visit an old analytic criterion of \RH that involves a special type of functions that can be seen as single layer neural networks.

\subsection{A \emph{Neural Network} Criterion for \RH}
For $p>1, d \in \mathbb{N}\backslash\{0\}$, and some set $S \subset \reals^d$, let $L_p(S)$ denote the set of real-valued functions $f$ defined on $S$ such as $|f|^p$ is Lebesgue integrable, i.e. 
$
L_p(S) = \{ f:S \to \reals: \int_S |f|^p d\mu < \infty\},
$
where $\mu$ is the Lebesgue measure on $\reals^d$. We denote by $\|.\|_p$ the standard Lebesgue norm defined by $\|f\|_p = \left( \int_S |f|^p d\mu\right)^{1/p}$ for $f \in L_p(S)$. 

\begin{figure}
    \centering
    \includegraphics[width=0.9\linewidth]{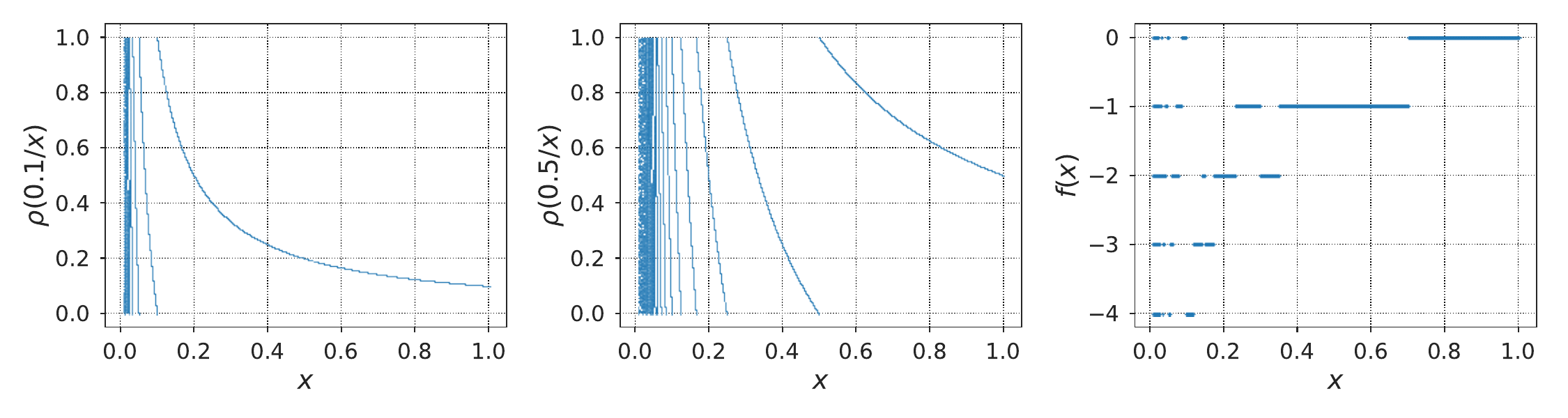}
    \caption{\small{(\textbf{Left}) The curve of the function $x \to \rho(\alpha / x)$ for $\alpha \in \{0,1; 0.5\}$. (\textbf{Right}) The graph of the function $f(x) = \rho(0.7/x) - \rho(0.3/x) - 4 \rho(0.1/x)$, which belongs to the class $\nn$.}}
    \label{fig:rho_plots}
\end{figure}
For some $k \geq 1$, let $I_k \overset{def}{=} (0,1)^k = (0,1)\times \dots \times (0,1)$ where the product contains $k$ terms.  Let $\rho$ denote the fractional part function given by
$
\rho(x) = x - \lfloor x \rfloor,$ for $x \in \reals$. Consider the following class of functions defined on the interval $I_1$
$$
\nn = \{ f(x) = \sum_{i=1}^m c_i \rho\left(\frac{\beta_i}{x}\right), x \in I_1: m \geq 1, c \in \reals^m, \beta \in I_m, c^T\beta = 0 \}.
$$
In machine learning nomenclature, $\nn$ consists of single-layer neural networks with a constrained parameter space and a specific non-linearity (or activation function) that depends on the fractional part $\rho$. The parameters $(c,\beta)$ belong to the set $\{c \in \reals^m, \beta \in (0,1)^m, c^T\beta = 0\}$. The values $(\rho(\beta_i/x))_{1\leq i \leq m}$ act as the neurons (post-activations) in the neural network. In \cref{fig:rho_plots}, we depict neuron values for different choices of $\beta_i$. The graphs show fluctuations when $x$ is close to $0$ which should be expected since the function $x\to \rho(\beta_i/x)$ fluctuates indefinitely between $0$ and $1$ as $x$ goes to zero, whenever $\beta_i \neq 0$. In figure \cref{fig:rho_plots} (right), we show an example of a function from the class $\nn$ given by $f(x) = \rho(0.7/x) - \rho(0.3/x) - 4 \rho(0.1/x)$. We observe that $f$ is a step function which might be surprising at first glance. However, it is easy to see that $\nn$ consists only of step functions. This is due to the constraint on the parameters $c, \beta$, and the fact that $\rho(x) = x - \lfloor x \rfloor$.
Now, we are ready to state the main results that draw an interesting connection between \RH~and the class $\nn$.
\begin{thm}[\cite{nyman}]\label{thm:nyman}
The \RH~is true if and only if $\nn$ is dense in $L_2(I_1)$.
\end{thm}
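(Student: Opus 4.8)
The plan is to transport the density statement, via the Mellin transform, into the Hardy space of a half-plane, where $\nn$ turns into the family of multiplications of well-behaved functions by $\zeta(s)/s$, so that density becomes a statement purely about the zeros of $\zeta$. First I would observe that $\nn$ is a linear subspace of $L_2(I_1)$ invariant under the contractions $D_\theta h(x)=h(x/\theta)\,\mathbf 1_{\{x<\theta\}}$, $\theta\in(0,1)$: substituting $\beta_i\mapsto\theta\beta_i$ preserves $c^{T}\beta=0$, and $\|D_\theta\|=\sqrt{\theta}$ on $L_2(I_1)$. Since $D_\theta\mathbf 1_{(0,1)}=\mathbf 1_{(0,\theta)}$ and step functions are dense in $L_2(I_1)$, it follows that $\nn$ is dense in $L_2(I_1)$ if and only if the single function $\mathbf 1_{(0,1)}$ lies in $\overline{\nn}$.

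Next I would pass to the Mellin transform $\widetilde h(s)=\int_0^1 h(x)\,x^{s-1}\,dx$, which (via $x=e^{-t}$ and the Paley--Wiener theorem) is, up to a constant, a unitary map from $L_2(I_1)$ onto the Hardy space $H^2$ of the half-plane $\{\realpart(s)>\tfrac12\}$, with the Mellin transform of $\mathbf 1_{(0,1)}$ equal to $1/s$. Using the classical identity $\zeta(s)=\tfrac{s}{s-1}-s\int_1^\infty\rho(u)\,u^{-s-1}\,du$ (valid for $\realpart(s)>0$), a short computation gives, for $\beta\in(0,1)$,
\[
\int_0^1 \rho\!\left(\tfrac{\beta}{x}\right)x^{s-1}\,dx=\frac{\beta}{s-1}-\frac{\beta^{s}\zeta(s)}{s}.
\]
Summing over a neuron configuration, $\widetilde f(s)=\tfrac{1}{s-1}\sum_i c_i\beta_i-\tfrac{\zeta(s)}{s}\sum_i c_i\beta_i^{\,s}$; the constraint $c^{T}\beta=\sum_i c_i\beta_i=0$ is exactly what makes the first term disappear and, at the same time, cancels the simple pole of $\zeta$ at $s=1$, so that $\widetilde f(s)=-\tfrac{\zeta(s)}{s}P(s)\in H^2$ with $P(s)=\sum_i c_i\beta_i^{\,s}$, $\beta_i\in(0,1)$, $P(1)=0$. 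Hence $\nn$ is dense in $L_2(I_1)$ if and only if $1/s$ lies in the closed linear span in $H^2$ of $\{\tfrac{\zeta(s)}{s}P(s)\}$ over all such $P$, and it remains to show this is equivalent to \RH.

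For the easy direction I would argue by obstruction: suppose \RH~fails. By the functional equation \cref{eq:func_eq}, if $z_0$ is a non-trivial zero then so is $1-z_0$, hence $\zeta$ has a zero $s_0$ with $\realpart(s_0)>\tfrac12$. Every function $\tfrac{\zeta(s)}{s}P(s)$ vanishes at $s_0$, whereas $1/s_0\ne0$, and point evaluation at an interior point is a bounded functional on $H^2$ (reproducing-kernel property); therefore the closed span is contained in a proper closed subspace missing $1/s$, so $\mathbf 1_{(0,1)}\notin\overline{\nn}$ and $\nn$ is not dense.

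For the converse --- the step I expect to be the main obstacle --- suppose \RH~holds, so $\zeta$ has no zeros in the open half-plane $\realpart(s)>\tfrac12$. Morally $1/s=\tfrac{\zeta(s)}{s}\cdot\tfrac{1}{\zeta(s)}$, so it suffices to approximate $1/\zeta$ in the appropriate sense by exponential sums $P$ with $P(1)=0$; a natural candidate is a corrected partial sum of $1/\zeta(s)=\sum_{n\ge1}\mu(n)n^{-s}$ (the correction absorbing the fact that $\sum_{n\le N}\mu(n)/n$ is only asymptotically zero). The genuine difficulty is that $1/\zeta$ is \emph{not} in $H^2$: under \RH~the zeros of $\zeta$ sit on the boundary line $\realpart(s)=\tfrac12$, where $1/\zeta$ then has non-$L_2$ singularities, so the ``division by $\zeta$'' must be carried out by a nontrivial limiting argument. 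Concretely, one would use the inner--outer factorization of $\zeta(s)/s$ in $H^2$, invoking the polynomial growth of $\zeta$ in vertical strips to exclude a singular inner factor, so that \RH~forces the Blaschke factor to be trivial and the span to fill $H^2$; equivalently, one controls the $H^2$-error of M\"obius-type approximants through bounds on $1/\zeta(\sigma+it)$ as $\sigma\downarrow\tfrac12$. This complex-analytic and number-theoretic estimate is the crux of the theorem; the remaining bookkeeping is routine.
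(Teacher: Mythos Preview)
Your proposal is correct, and on the direction the paper actually argues it is the same proof in different clothing. The paper only sketches the implication ``$\nn$ dense $\Rightarrow$ no zeros in $\realpart(s)>\tfrac12$'': it uses the identical Mellin identity $\int_0^1\rho(\beta/x)\,x^{s-1}\,dx=\beta/(s-1)-\beta^{s}\zeta(s)/s$, sums under the constraint $c^{T}\beta=0$ to obtain $\int_0^1(1-f(x))\,x^{s-1}\,dx=s^{-1}\bigl(1+\zeta(s)\sum_i c_i\beta_i^{\,s}\bigr)$, and then bounds the left-hand side by H\"older/Cauchy--Schwarz to force $\zeta(s)\neq0$ once $\|1-f\|_2$ is small. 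Your $H^2$ point-evaluation argument is exactly that inequality repackaged: under the Mellin isometry the reproducing kernel of $H^2(\realpart s>\tfrac12)$ at $s_0$ is the image of $x\mapsto x^{\overline{s_0}-1}$, so ``evaluation at $s_0$ is bounded'' \emph{is} the Cauchy--Schwarz step the paper writes out explicitly. Your reduction ``density $\Leftrightarrow \mathbf 1_{(0,1)}\in\overline{\nn}$'' via the contractions $D_\theta$ is likewise mentioned in the paper immediately after the zero-free-region lemma.

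For the converse (\RH\ $\Rightarrow$ density) the paper gives no argument at all and simply refers the reader to Beurling. Your outline via inner--outer factorization of $\zeta(s)/s$ in $H^2$---polynomial vertical growth excluding a singular inner factor, \RH\ forcing the Blaschke part to be trivial, hence the shift-invariant subspace generated being all of $H^2$---is precisely Beurling's route, so here you go beyond the paper rather than diverge from it. The one point to keep track of when you fill in the details is that $\zeta(s)/s$ itself is not in $H^2$ because of the pole at $s=1$; the constraint $P(1)=0$ is what compensates, and in practice one works with $(s-1)\zeta(s)/s$ (or an equivalent normalization) before invoking the invariant-subspace theorem. You already flag this step as the crux, which is accurate.
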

\cite{beurling} later extended this result by showing that for any $p>1$, the $\zeta$ function has no zeroes in the set $\{ z \in \complex: \realpart(z)>1/p\}$ if and only if the set $\nn$ is dense in $L_p(I_1)$.

\begin{thm}[\cite{beurling}]\label{thm:beurling}
The Riemann zeta function is free from zeros in the half plane $Re(z)>\frac{1}{p}, 1<p<\infty$, if and only if $\nn$ is dense in $L_p(I_1)$.
\end{thm}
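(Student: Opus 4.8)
The plan is to run the classical Mellin-transform/duality argument of Nyman and Beurling. Fix $p\in(1,\infty)$ and let $q$ be the conjugate exponent, $1/p+1/q=1$. By duality, $\nn$ is dense in $L_p(I_1)$ if and only if its annihilator in $L_q(I_1)$ is trivial, i.e. the only $g\in L_q(I_1)$ with $\int_0^1 f(x)g(x)\,dx=0$ for all $f\in\nn$ is $g=0$. The first step is to compute the Mellin transform of the activations: starting from the classical identity $\zeta(s)=\frac{s}{s-1}-s\int_1^\infty\{u\}u^{-s-1}\,du$ (valid for $\realpart(s)>0$), a change of variable gives, for $\realpart(s)>0$ and $\beta\in(0,1)$,
\[
\int_0^1\rho\left(\frac{\beta}{x}\right)x^{s-1}\,dx=\frac{\beta}{s-1}-\beta^{s}\,\frac{\zeta(s)}{s}.
\]
Hence for $f=\sum_{i=1}^m c_i\rho(\beta_i/\cdot)\in\nn$ the constraint $c^{T}\beta=0$ cancels the pole term and leaves $\widetilde f(s):=\int_0^1 f(x)x^{s-1}\,dx=-\frac{\zeta(s)}{s}\sum_{i=1}^m c_i\beta_i^{s}$, a Dirichlet-type polynomial times $\zeta(s)/s$, analytic on $\realpart(s)>0$ (the apparent pole at $s=1$ is removable since $\sum_i c_i\beta_i^{s}$ vanishes there).

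The easy implication is then immediate. If $\zeta(s_0)=0$ with $\sigma_0:=\realpart(s_0)>1/p$, then $x\mapsto x^{s_0-1}$ lies in $L_q(I_1)$ (exactly because $\sigma_0>1/p$), and pairing any $f\in\nn$ against it gives $\widetilde f(s_0)=-\frac{\zeta(s_0)}{s_0}\sum_i c_i\beta_i^{s_0}=0$; the real or the imaginary part of $x^{s_0-1}$, at least one of which is a nonzero element of $L_q(I_1)$, is then a nontrivial annihilator, so $\nn$ is not dense. This establishes the implication ``$\nn$ dense in $L_p(I_1)$ $\Rightarrow$ $\zeta$ zero-free in $\realpart(z)>1/p$''.

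For the converse, assume $\zeta$ has no zeros in $\realpart(z)>1/p$ and let $g\in L_q(I_1)$ annihilate $\nn$; we want $g=0$. Using $c^{T}\beta=0$, one first notes that $G(\beta):=\int_0^1\rho(\beta/x)g(x)\,dx$ satisfies $\beta'G(\beta)=\beta G(\beta')$, hence $G(\beta)=\lambda\beta$ on $(0,1)$ for one constant $\lambda$. Feeding this into the Mellin--Parseval formula on a vertical line $\realpart(s)=c\in(0,1/p)$ together with the transform above, the relation $G(\beta)=\lambda\beta$ becomes the statement that the inverse Mellin transform of $s\mapsto\frac{\zeta(s)}{s}\,\widetilde g(1-s)$ coincides on $(0,1)$ with a linear function of $\beta$. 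Equivalently, after the substitution $x=e^{-t}$ the activations $\rho(\beta/x)$ become rescaled one-sided translates of the single function $r(t)=\rho(e^{t})e^{-t/p}\in L_1(\reals)\cap L_p(\reals)$, and $g$ annihilates the closed span of these translates up to a one-dimensional correction. The analytic core is a closure theorem in the spirit of Wiener's Tauberian theorem combined with inner/outer factorization: these translates (equivalently, the monomials $\{\beta^{s}:\beta\in(0,1)\}$) are total in the relevant subspace, so the only obstruction to $g=0$ is a common zero of the multiplier $\zeta(s)/s$ in $\realpart(s)>1/p$; since $1/s$ never vanishes and $\zeta$ is zero-free there by assumption, $\widetilde g\equiv0$ on the half-plane, hence $g=0$ and $\nn$ is dense.

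The main obstacle is exactly that closure theorem. For $p=2$ it is transparent: the Mellin transform is an isometry onto the Hardy space $H^2$ of $\realpart(s)>1/2$, totality of the monomials follows from Wiener's theorem, and a zero-free $\zeta(s)/s$ acts (up to a harmless outer factor) as an outer multiplier, hence preserves density. For general $p$ one must instead invoke Beurling's more delicate results on the closure of translates in $L^p(\reals)$, and supply quantitative control of $\zeta$ and $1/\zeta$ in the open half-plane $\realpart(s)>1/p$ --- classical growth estimates give this only on $\realpart(s)\ge1/p+\varepsilon$, so one argues there and lets $\varepsilon\to0$. Lower-order technicalities to settle: justifying the Mellin--Parseval interchange when $q\neq2$, absorbing the pole of $\zeta$ at $s=1$, and checking that restricting the parameters to the open cube $\beta\in(0,1)^m$ rather than $(0,1]^m$ does not shrink the closure.
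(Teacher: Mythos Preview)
Your proposal is correct and in fact goes further than the paper, which only sketches the sufficient direction (density $\Rightarrow$ zero-free) and defers entirely to \cite{beurling} for the converse. For that sufficient direction, you and the paper take different routes. You argue the contrapositive via duality: a zero $s_0$ with $\realpart(s_0)>1/p$ produces the nontrivial annihilator $x^{s_0-1}\in L_q(I_1)$, so $\nn$ is not dense. The paper instead argues directly: given density, choose $f\in\nn$ with $\|1-f\|_p<\varepsilon$, use the same Mellin identity to write $\int_0^1(1-f(x))x^{z-1}\,dx=z^{-1}\bigl(1+\zeta(z)\sum_i c_i\beta_i^{z}\bigr)$, and apply H\"older to obtain $|1+\zeta(z)\sum_i c_i\beta_i^{z}|^q<\varepsilon^q|z|^q/\bigl(q(\realpart(z)-1/p)\bigr)$, hence $\zeta(z)\neq0$ on $\{\realpart(z)>1/p+\varepsilon^q|z|^q/q\}$; letting $\varepsilon\to0$ yields the half-plane. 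Your duality version is shorter, but the paper's direct version is \emph{quantitative}: it is precisely what delivers Lemma~\ref{lemma:zero_free_region_1} on explicit zero-free regions from a single approximant $f$, which is the lemma the rest of the paper builds on. For the necessary direction, your outline (reducing the annihilator condition to $G(\beta)=\lambda\beta$ and then invoking Beurling's closure-of-translates theorem with the zero-free hypothesis making $\zeta(s)/s$ act as an outer multiplier) is faithful to Beurling's original argument and is strictly more than the paper supplies.
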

The intuition behind this connection is rather simple. The number of fluctuations of the function $x \to \rho(\beta/x)$ near $0$ is closely related to the $\zeta$ function. To understand the machinery of the proofs of \cref{thm:nyman} and \cref{thm:beurling}, we provide a sketch of the proof by \cite{beurling} for the sufficient condition in \cref{app:nyman_beurling}. 
Using the same techniques, we derive the following result on zero-free regions of the zeta function.
\begin{lemma}[Nyman-Beurling zero-free regions]\label{lemma:zero_free_region_1}
Let $f \in \nn$ and $\delta = \|1 - f\|_2$ be the distance between the constant function $1$ on $I_1$ and $f$. Then, the region $\{z \in \complex, \realpart(z) > \frac{1}{2} \left(1 + \delta^2 |z|^2\right)\}$ is free of zeroes of the Riemann zeta function $\zeta$.
\end{lemma}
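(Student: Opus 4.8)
The plan is to pass to the Mellin transform, where the defining constraint $c^T\beta=0$ of the class $\nn$ produces a clean identity linking $1-f$ to $\zeta$, and then to read off the zero-free region from a single Cauchy--Schwarz estimate.

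First I would compute the relevant Mellin transforms. For a bounded function $h$ on $I_1$, write $\mathcal{M}h(s)=\int_0^1 h(x)\,x^{s-1}\,dx$, which is holomorphic on $\{\realpart(s)>0\}$. Splitting the integral at $x=\beta$ and using the classical representation $\zeta(s)=\frac{s}{s-1}-s\int_1^\infty \rho(u)\,u^{-s-1}\,du$ (valid for $\realpart(s)>0$), a short computation gives, for every $\beta\in(0,1)$,
\[
\mathcal{M}\!\left[x\mapsto \rho(\beta/x)\right](s)\;=\;\frac{\beta}{s-1}\;-\;\frac{\beta^{s}\,\zeta(s)}{s}.
\]
Hence, if $f(x)=\sum_{i=1}^m c_i\,\rho(\beta_i/x)\in\nn$, the constraint $c^T\beta=\sum_{i=1}^m c_i\beta_i=0$ cancels all the $\tfrac{1}{s-1}$ contributions and leaves $\mathcal{M}f(s)=-\frac{\zeta(s)}{s}\,F(s)$, where $F(s):=\sum_{i=1}^m c_i\,\beta_i^{\,s}$ is entire with $F(1)=0$. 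Since $\mathcal{M}[x\mapsto 1](s)=1/s$, putting $\Phi:=\mathcal{M}(1-f)$ we get
\[
\Phi(s)\;=\;\frac{1+\zeta(s)F(s)}{s},
\]
and since $F(1)=0$ removes the simple pole of $\zeta$ at $1$, the function $\Phi$ is holomorphic on all of $\{\realpart(s)>0\}$. In particular, at any zero $z_0$ of $\zeta$ with $\realpart(z_0)>1/2$ one has $\Phi(z_0)=1/z_0$.

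The only genuine analytic input is a pointwise bound on $\Phi$. The substitution $x=e^{-u}$ turns $\Phi$ into a Laplace transform: with $\phi(u):=(1-f)(e^{-u})\,e^{-u/2}$, a direct change of variables shows $\Phi(s)=\int_0^\infty \phi(u)\,e^{-(s-1/2)u}\,du$ for $\realpart(s)>0$, and $\phi\in L_2(0,\infty)$ with $\|\phi\|_2^2=\|1-f\|_2^2=\delta^2$ (here we use that $1-f$ is a bounded step function, so $|\phi(u)|^2\le C e^{-u}$ for some constant $C$). Cauchy--Schwarz then yields, for $\realpart(s)>1/2$,
\[
|\Phi(s)|^2\;\le\;\|\phi\|_2^2\int_0^\infty e^{-2(\realpart(s)-1/2)u}\,du\;=\;\frac{\delta^2}{2\,(\realpart(s)-1/2)}.
\]
To conclude, suppose $\zeta(z_0)=0$ with $\sigma_0:=\realpart(z_0)>1/2$. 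Then $\Phi(z_0)=1/z_0$, so $\frac{1}{|z_0|^2}\le\frac{\delta^2}{2(\sigma_0-1/2)}$, i.e. $\sigma_0\le\frac12\big(1+\delta^2|z_0|^2\big)$. Therefore no zero of $\zeta$ can satisfy $\realpart(z)>\tfrac12(1+\delta^2|z|^2)$ --- the case $\realpart(z)\le 1/2$ being automatic, since $\tfrac12(1+\delta^2|z|^2)\ge \tfrac12$ --- which is precisely the assertion of the lemma.

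The steps that are routine but need care are the derivation of the Mellin identity for $x\mapsto\rho(\beta/x)$ --- especially justifying the meromorphic continuation so that the $\tfrac{1}{s-1}$ poles are legitimately cancelled by $c^T\beta=0$ --- and the verification that $\phi\in L_2(0,\infty)$ with exactly the stated norm; both are straightforward since every element of $\nn$ is a bounded step function. The load-bearing step is the Cauchy--Schwarz bound on $|\Phi(s)|^2$, and the only thing to watch there is the normalization: it must come out as $\delta^2/(2(\realpart(s)-1/2))$ with no stray constant, since that is what makes the threshold $\tfrac12(1+\delta^2|z|^2)$ appear with the correct constants.
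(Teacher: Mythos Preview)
Your proof is correct and follows essentially the same route as the paper: the Mellin identity for $x\mapsto\rho(\beta/x)$, cancellation of the $\tfrac{1}{s-1}$ terms via $c^T\beta=0$, and a single Cauchy--Schwarz estimate on $\Phi(s)$ to force $\realpart(z_0)\le\tfrac12(1+\delta^2|z_0|^2)$ at any zero. The only difference is cosmetic: the paper applies Cauchy--Schwarz directly on $I_1$ to $\int_0^1(1-f)(x)\,x^{s-1}\,dx$ using $\|x^{s-1}\|_2^2=(2\,\realpart(s)-1)^{-1}$, while your substitution $x=e^{-u}$ to a Laplace transform on $(0,\infty)$ reproduces exactly the same bound after one extra change of variables.
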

The condition that $\nn$ should be dense in $L_2(I_1)$ can be replaced by the following weaker condition: the constant function $1$ on $I_1$ can be approximated up to an arbitrary accuracy with functions from $\nn$. This is because from the constant function $1$, one can construct an approximation of any step-wise function, which in turn can approximate any function in $L_2(I_1)$.

A discussion on the empirical implications of \cref{lemma:zero_free_region_1} is provided in \cref{app:nyman_beurling}. In the next section, we show that the sufficient condition of \cref{thm:beurling} can be easily generalized to networks with multi-dimensional inputs, i.e. the case $d\geq 1$. 
\section{A sufficient condition in the multi-dimensional case}
Let $d \geq 1$ and consider the following class of neural networks with inputs in $I_d$,
$$
\nn_d = \{ f(x) = \sum_{j=1}^d \sum_{i=1}^m c_{i,j} \rho\left(\frac{\beta_{i,j}}{x_j}\right), x \in I_d: m \geq 1, c \in \reals^{d\times m}, \beta \in I_{d \times m}, c^T\beta = 0 \},
$$
where $c = (c_{1,1}, c_{2,1}, \dots, c_{m,1}, \dots, c_{m, d})^\top \in \reals^{m d}$ is the flattened vector of $(c_{.,j})_{1 \leq j \leq d}$. Notice that we recover the Nyman-Beurling class $\nn$ when $d=1$. Using this class, we can generalize the zero-free region result given by \cref{lemma:zero_free_region_1} to a multi-dimensional setting in the case $p=2$.\footnote{The choice of $p=2$ is arbitray, and similar result to that of \cref{lemma:zero_free_region_d} can be obtained for any $p>1$.}

\begin{lemma}[zero-free regions for general $d \geq 1$]\label{lemma:zero_free_region_d}
Let $d\geq 1$ and $f \in \nn_d$. Let $\delta = \|1 - f\|_2$ be the $L_2$ distance between the constant function $1$ on $I_d$ and $f$. Then, the region $\{z \in \complex, \realpart(z) > \frac{1}{2} \left(1 + \delta^{\frac{2}{d}} |z|^2\right)\}$ is free of zeroes of the Riemann zeta function $\zeta$.
\end{lemma}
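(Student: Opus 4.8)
The plan is to run the one-dimensional Nyman--Beurling argument (sketched in \cref{app:nyman_beurling}) on the diagonal of $I_d$, exploiting that every $f\in\nn_d$ is \emph{separable}: $f(x)=\sum_{j=1}^d g_j(x_j)$ with $g_j(t)=\sum_{i=1}^m c_{i,j}\,\rho(\beta_{i,j}/t)$ a function of one variable. The workhorse is the Mellin identity on $I_1$: for $0<\realpart(s)<1$,
\[
\int_0^1 \rho\!\left(\tfrac{\beta}{t}\right) t^{s-1}\,dt \;=\; \frac{\beta}{s-1}\;-\;\frac{\beta^{s}\,\zeta(s)}{s},
\]
which is, up to rearrangement, the integral representation that continues $\zeta$ into the critical strip and which already underlies \cref{thm:beurling}. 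Summing over $i$ gives $\int_0^1 g_j(t)\,t^{s-1}\,dt=\frac{1}{s-1}\sum_i c_{i,j}\beta_{i,j}-\frac{\zeta(s)}{s}\sum_i c_{i,j}\beta_{i,j}^{s}$.

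First I would compute the $d$-dimensional Mellin transform of $1-f$ on the diagonal. Because the weight $\prod_j x_j^{s-1}$ factorizes over $I_d$, applying the one-variable identity to each $g_j$ yields
\[
\int_{I_d}\!\big(1-f(x)\big)\prod_{j=1}^d x_j^{s-1}\,dx
=\frac{1}{s^{d}}\big(1+\zeta(s)M_f(s)\big)-\frac{c^T\beta}{s^{d-1}(s-1)},
\qquad 0<\realpart(s)<1,
\]
where $M_f(s):=\sum_{j=1}^d\sum_{i=1}^m c_{i,j}\beta_{i,j}^{s}$. Now suppose, for contradiction, that $\zeta$ has a zero $s_0$ with $\sigma_0:=\realpart(s_0)\in(1/2,1)$. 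The single global constraint $c^T\beta=0$ annihilates the last term, and $\zeta(s_0)=0$ annihilates $\zeta(s)M_f(s)$, so the expression collapses to $\int_{I_d}(1-f)(x)\prod_j x_j^{s_0-1}\,dx = s_0^{-d}$.

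The final step is a Cauchy--Schwarz bound in $L_2(I_d)$. Since $\sigma_0>1/2$, the function $x\mapsto\prod_j x_j^{\sigma_0-1}$ lies in $L_2(I_d)$ with squared norm $\prod_j\int_0^1 x_j^{2\sigma_0-2}\,dx_j=(2\sigma_0-1)^{-d}$, hence
\[
|s_0|^{-d}=\left|\int_{I_d}(1-f)(x)\prod_j x_j^{s_0-1}\,dx\right|\le \|1-f\|_2\,(2\sigma_0-1)^{-d/2}=\delta\,(2\sigma_0-1)^{-d/2}.
\]
Rearranging gives $(2\sigma_0-1)^{d/2}\le\delta\,|s_0|^{d}$, i.e.\ $2\sigma_0-1\le\delta^{2/d}|s_0|^2$, i.e.\ $\sigma_0\le\tfrac12\big(1+\delta^{2/d}|s_0|^2\big)$. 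Taking the contrapositive, $\zeta$ has no zero in $\{z:\realpart(z)>\tfrac12(1+\delta^{2/d}|z|^2)\}$; this region is contained in $\{\realpart(z)>1/2\}$, so trivial zeros and any zeros on or to the left of the critical line are excluded for free. Specializing $d=1$ recovers \cref{lemma:zero_free_region_1}.

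I expect the only subtle point to be the analytic bookkeeping around the Mellin identity on the strip $0<\realpart(s)<1$ (justifying the representation $\int_0^1\rho(\beta/t)t^{s-1}dt=\tfrac{\beta}{s-1}-\tfrac{\beta^{s}\zeta(s)}{s}$ and the termwise manipulations), together with the observation that it is exactly the \emph{single, global} linear constraint $c^T\beta=0$ --- not a per-coordinate one --- that removes the pole at $s=1$ once the contributions of all $i,j$ are summed. The remaining ingredients, the product factorization over $I_d$ and the Cauchy--Schwarz estimate, are routine.
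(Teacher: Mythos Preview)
Your proof is correct and follows essentially the same route as the paper: compute the $d$-fold Mellin transform of $1-f$ via the one-variable identity and the constraint $c^{\top}\beta=0$, then apply Cauchy--Schwarz against $\prod_j x_j^{s-1}$ to obtain $2\sigma-1\le\delta^{2/d}|s|^{2}$. The only cosmetic difference is that you phrase the final step as a contradiction at a putative zero $s_0$, whereas the paper writes the Cauchy--Schwarz bound for general $z$ with $\realpart(z)>\tfrac12$ and reads off the zero-free region directly.
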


\begin{wrapfigure}{r}{0.4\textwidth}
\vspace{-3em}
  \begin{center}
    \includegraphics[width=0.4\textwidth]{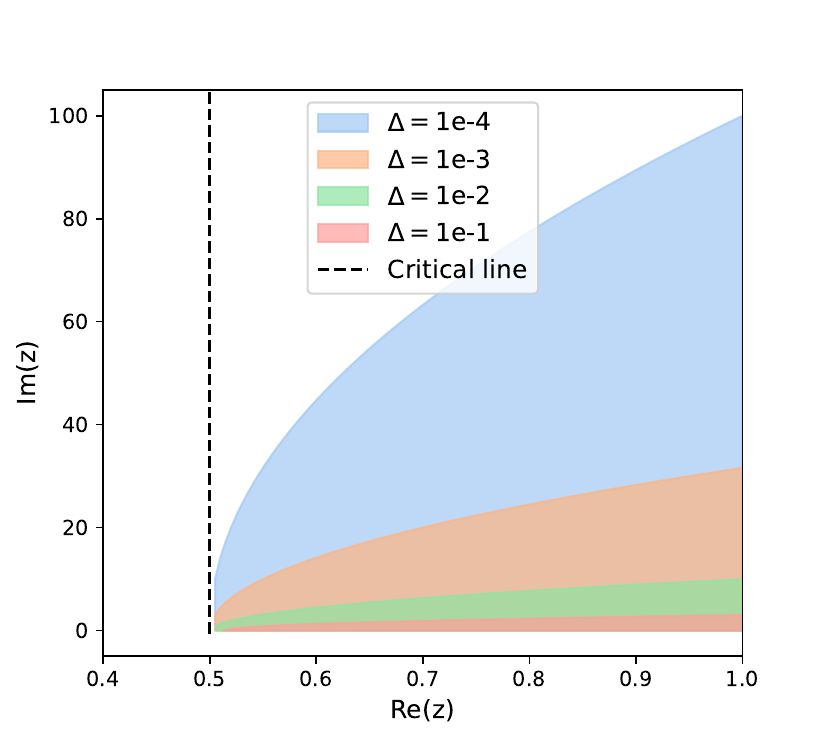}
  \end{center}
  \vspace{-1.5em}
  \caption{Zero-free regions of the form $\{\realpart(z) > \frac{1}{2}(1 + \Delta |z|^2)\}$ as stated in \cref{lemma:zero_free_region_1}, \cref{lemma:zero_free_region_d}, and \cref{lemma:zero_free_region_emp}.}\label{fig:zer_free_regions}
\end{wrapfigure}
In \cref{fig:zer_free_regions}, we depict the zero-free regions from Lemma \ref{lemma:zero_free_region_d}. The smaller the constant $\delta$, the larger the region. The multi-dimensional input case ($d \geq 2$) can therefore be interesting if we can better approximate the constant function $1$ with functions from $\nn_d$. More precisely, the result of Lemma \ref{lemma:zero_free_region_d} is relevant if for some $d\geq 2$, we could find $\delta$ such that $\delta^{2/d} < \delta_1$, where $\delta_1$ is the approximation error in the one-dimensional case $d=1$. In this case, the zero-free region obtained with $d\geq 2$ will be larger than the one obtained with $d=1$. We refer the reader to \cref{app:empirical_implications} for a more-in depth discussion about the empirical implications of the multi-dimensional case.
Notice that if $\delta$ can be chosen arbitrarily small, then the zero-free region in Lemma \ref{lemma:zero_free_region_d} can be extended to the whole half-plane $\{\realpart(z) > 1/2\}$. This is a generalization of the sufficient condition of \cref{thm:beurling} in the multi-dimensional case.
\begin{corollary}[Sufficient condition for $d\geq 1$]
Let $d \geq 1$. Assume that the class $\nn_d$ is dense in $L_2(I_d)$. Then, the region $\{\realpart(z) > 1/2\}$ is free of the zeroes of the Riemann zeta function $\zeta$.
\end{corollary}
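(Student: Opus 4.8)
The plan is to obtain this as an immediate consequence of \cref{lemma:zero_free_region_d}, the only real content being a careful ordering of the quantifiers. Fix an arbitrary point $z_0 \in \complex$ with $\realpart(z_0) > 1/2$; since $z_0$ is arbitrary in the half-plane, it suffices to show $\zeta(z_0) \neq 0$.

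First I would record that $2\realpart(z_0) - 1 > 0$ and that $z_0 \neq 0$ (as $\realpart(z_0) > 1/2$), so the quantity $\eta := (2\realpart(z_0) - 1)/|z_0|^2$ is a strictly positive real number. Next, since $I_d$ has finite Lebesgue measure the constant function $1$ lies in $L_2(I_d)$, and by the assumed density of $\nn_d$ in $L_2(I_d)$ there is some $f \in \nn_d$ with $\delta := \|1 - f\|_2$ small enough that $\delta^{2/d} < \eta$. Rearranging this inequality yields $\tfrac{1}{2}\bigl(1 + \delta^{2/d}|z_0|^2\bigr) < \realpart(z_0)$, i.e. $z_0$ belongs to the region $\{z \in \complex : \realpart(z) > \tfrac{1}{2}(1 + \delta^{2/d}|z|^2)\}$ attached to this particular $f$. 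By \cref{lemma:zero_free_region_d} that region contains no zeros of $\zeta$, hence $\zeta(z_0) \neq 0$. Since $z_0$ ranged over all of $\{\realpart(z) > 1/2\}$, the whole open half-plane is zero-free. Equivalently, one may write $\{\realpart(z) > 1/2\} = \bigcup_{f \in \nn_d}\{z : \realpart(z) > \tfrac{1}{2}(1 + \|1-f\|_2^{2/d}|z|^2)\}$ and apply \cref{lemma:zero_free_region_d} to each set in the union.

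I do not anticipate a genuine obstacle: the substance is already in \cref{lemma:zero_free_region_d}, and the corollary is simply its limiting form as the approximation error $\delta$ is driven to $0$. The one point deserving (minor) care is that no single $f \in \nn_d$ produces the entire half-plane — the regions only exhaust it in the union over all admissible $f$ — so the argument must fix the target $z_0$ first and then select $f$ (equivalently $\delta$) depending on $z_0$, rather than trying to pick one $f$ that works uniformly.
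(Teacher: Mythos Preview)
Your proposal is correct and matches the paper's approach: the paper simply remarks that if $\delta$ can be made arbitrarily small then the zero-free region of \cref{lemma:zero_free_region_d} extends to the whole half-plane $\{\realpart(z)>1/2\}$, and your argument is precisely the rigorous unpacking of that remark via a fix-$z_0$-then-choose-$f$ quantifier order. If anything, your version is more careful than the paper's one-line justification.
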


\subsection{Open problem: The necessary condition for $d\geq 2$}

By considering the class $\nn_d$, we generalized the sufficient condition of Beurling's criterion in the multi-dimensional input case $d \geq 2$. However, it is unclear whether a similar necessary condition holds. Proving that \RH~implies the density of $\nn_d$ in $L_2(I_d)$ is challenging. A function $f \in \nn_d$ can be expressed as $f(x) = \sum_{i=1}^d f_i(x_i)$ for $x = (x_1, \dots, x_d)^\top \in I_d$, and $f_i$ are functions with one-dimensional inputs. This special additive form of functions from $\nn_d$ makes it harder to use arguments similar to the one-dimensional case (\cref{thm:beurling}) to prove density results.

\section{Discussion on the Implications and Limitations}\label{app:empirical_implications}
In this section, we discuss some empirical implications of Lemma \ref{lemma:zero_free_region_1} and Lemma \ref{lemma:zero_free_region_d}.

\paragraph{Probabilistic zero-free regions.} Notice that Lemmas \ref{lemma:zero_free_region_1} and \ref{lemma:zero_free_region_d} require access to the distance $\|1 - f\|_2$ which is generally intractable. However, we can approximate this quantity using Monte Carlo samples and obtain high probability bounds for this norm. Hence, the best we can do with such criterion is to verify the non-existence of zeroes of $\zeta$ in some region \emph{with high probability}. Indeed, using Hoefdding's inequality, we have the following result.

\begin{lemma}\label{lemma:zero_free_region_emp}
Let $d\geq 1$, $N \geq 1$ and $X_1, X_2, \dots, X_N$ be iid uniform random variables on $I_d$. Let $f \in \nn_d$ (where for $d=1$, we denote $\nn_d = \nn$) such that $f(x) = \sum_{j=1}^d\sum_{i=1}^m c_{i,j} \rho\left( \frac{\beta_{i,j}}{x_j} \right)$ for all $x \in I_d$, for some $m \geq 1, \beta \in I_{m\times d}, c \in \reals^{m \times d}$. Then, for any $\alpha \in (0,1)$, we have with probability at least $1- \alpha$, the region $R_N \overset{def}{=} \{ \realpart(z) > \frac{1}{2}\left(1  + \Delta_N(f)^{1/d} |z|^2\right)\}$ is free of the zeroes of $\zeta$, where $\Delta_N(f) = \frac{1}{N} \sum_{i=1}^N (1 - f(x_i))^2 \, + (1 + \|c\|_1^2) \sqrt{\frac{2 \log(2/\alpha)}{N}}$, with $\|c\|_1 = \sum_{i=1}^m |c_i|$.
\end{lemma}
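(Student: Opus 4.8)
The plan is to combine the deterministic zero-free region of \cref{lemma:zero_free_region_d} with a Hoeffding-type concentration bound for the empirical estimate of $\delta^2 = \|1-f\|_2^2$. First I would recall from \cref{lemma:zero_free_region_d} that if $\delta = \|1-f\|_2$, then the region $\{\realpart(z) > \tfrac12(1 + \delta^{2/d}|z|^2)\}$ is free of zeros of $\zeta$. Since $X_1,\dots,X_N$ are iid uniform on $I_d$, the quantity $\frac{1}{N}\sum_{i=1}^N (1-f(X_i))^2$ is an unbiased estimator of $\delta^2 = \E[(1-f(X))^2]$. So the core of the argument is to show that, with probability at least $1-\alpha$, $\delta^2 \le \Delta_N(f)$, because then $\delta^{2/d} \le \Delta_N(f)^{1/d}$ (the map $t \mapsto t^{1/d}$ is monotone on $[0,\infty)$), and the claimed region $R_N$ is contained in the deterministic zero-free region of \cref{lemma:zero_free_region_d}.

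To get the concentration bound, I would apply Hoeffding's inequality to the random variables $Y_i = (1-f(X_i))^2$. These are iid, so I need an almost-sure bound on their range. Since $\rho$ takes values in $[0,1)$, for any $x \in I_d$ we have $|f(x)| = |\sum_{j,i} c_{i,j}\rho(\beta_{i,j}/x_j)| \le \sum_{j,i} |c_{i,j}| = \|c\|_1$, hence $|1 - f(x)| \le 1 + \|c\|_1$ and $Y_i \in [0, (1+\|c\|_1)^2]$. Hoeffding then gives $\Pr\big(\delta^2 - \frac1N\sum_i Y_i > t\big) \le \exp\big(-\frac{2Nt^2}{(1+\|c\|_1^2)^2}\big)$ — more precisely with the range $(1+\|c\|_1)^2$ in the denominator; setting the right-hand side equal to $\alpha/2$ (the factor $2$ is the harmless slack the statement builds in) and solving for $t$ yields $t = (1+\|c\|_1^2)\sqrt{\tfrac{2\log(2/\alpha)}{N}}$ after bounding $(1+\|c\|_1)^2 \le 2(1+\|c\|_1^2)$ or a similar elementary inequality. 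This is exactly the correction term in the definition of $\Delta_N(f)$, so on the complement of the bad event (probability $\ge 1-\alpha$) we get $\delta^2 \le \Delta_N(f)$.

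Finally I would assemble the pieces: on the good event, $R_N \subseteq \{\realpart(z) > \tfrac12(1+\delta^{2/d}|z|^2)\}$, and by \cref{lemma:zero_free_region_d} the latter is free of zeros of $\zeta$, hence so is $R_N$. I expect the only subtle point — and thus the main obstacle — to be the bookkeeping on constants in the Hoeffding step: matching the range $(1+\|c\|_1)^2$ of the bounded variables to the stated coefficient $(1+\|c\|_1^2)$ requires a small elementary inequality, and one must be careful that the one-sided Hoeffding bound (we only need $\delta^2$ to not exceed its estimate) together with the factor-of-two slack in $\log(2/\alpha)$ is what produces the claimed high-probability guarantee. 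Everything else is a direct substitution into the deterministic lemma.
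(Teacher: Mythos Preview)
Your proposal is correct and follows essentially the same approach as the paper: apply Hoeffding's inequality to the bounded random variables $(1-f(X_i))^2$, using $|f(x)|\le \|c\|_1$ to get $(1-f(x))^2 \le 2(1+\|c\|_1^2)$, and then feed the resulting high-probability upper bound on $\delta^2$ into \cref{lemma:zero_free_region_d}. The paper's proof is terser but identical in substance, including the same boundedness estimate; your remarks about the constant bookkeeping (one-sided versus two-sided Hoeffding and the $\log(2/\alpha)$ slack) are exactly the right way to reconcile the range $2(1+\|c\|_1^2)$ with the stated correction term.
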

\begin{proof}
The proof follows from a simple application of Hoeffding's concentration inequality to control the deviations of the empirical risk $N^{-1}\sum_{i=1}^N (1 - f(x_i))^2$. Hoeffding's lemma requires that the random variables $(1 - f(x_i))^2$ are bounded, which is straightforward  since $(1 - f(x_i))^2 \leq 2 (1 + f(x_i)^2) \leq 2 (1 + \|c\|^2_1)$ almost surely.
\end{proof}
The result of \cref{lemma:zero_free_region_emp} has an important implication on the choice of the sample size. Indeed, to have the coefficient $\delta_N^{1/d}$ of order $\epsilon$ with high probability, a necessary condition is that $N = \mathcal{O}(\epsilon^{2d})$.

\paragraph{When is the multi-dimensional variant better than the one-dimensional criterion?}
For some $d \geq 2$, it is straightforward that the multi-dimensional criterion given in Lemma \ref{lemma:zero_free_region_d} is better than the one given in Lemma \ref{lemma:zero_free_region_1} only if $\inf_{f \in \nn_d} \|1 - f\|_2^{1/d} < \inf_{f \in \nn} \|1 - f\|_2$. Under this condition, the zero-free region is larger with $d \geq 2$. For empirical verification of the \RH~and for same probability threshold $\alpha$, Lemma \ref{lemma:zero_free_region_emp} implies that the multi-dimensional setting is better than the one-dimensional counterpart whenever $\inf_{f \in \nn_d} \Delta_N(f)^{1/d} < \inf_{f \in \nn} \Delta_N(f)$. We discuss the feasibility of such conditions in the next paragraph.

\paragraph{What does it take to improve upon existing numerical verifications of \RH?} The high probability zero-free regions from Lemma \ref{lemma:zero_free_region_emp} are of the form $\{\realpart(z) > \frac{1}{2}(1 + \Delta |z|^2)\}$ for some constant $\Delta>0$. \\
Using a different analytical criterion of the \RH,  \cite{platt_rh_true} showed that the region $\{a + ib: a \in (0,1), b \in (0, \gamma], \gamma \approx 3\cdot10^{12}\}$ is free of the zeroes of $\zeta$. Hence, using Lemma \ref{lemma:zero_free_region_emp} to improve this result requires that the region $R_N \cap \{ a + ib, a \in (0,1), b \in (0, \gamma]\}$ contains complex numbers $z$ with imaginary part larger than order $10^{12}$. Let $z = a + ib \in \complex$. Having $z \in R_N$ implies that $b^2 < -a^2 + \Delta_N(f)^{-1/d} (2 a - 1)$. For the region of interest where $a \in (0,1)$, and assuming that $\Delta_N(f)$ is small enough, the right-hand side is of order $\Delta_N(f)^{-1/d} (2a -1)$ which is maximized for $a = 1$ and equal to $\Delta_N(f)^{-1/d}$. Thus, to improve upon existing work \citep{platt_rh_true} (at least with some high probability certificate), we need to have $\Delta_N(f)^{-1/d}$ of order $10^{12}$, which means that $\Delta_N(f)^{1/d}$ should be at least of order $10^{-12}$. This requires a the minimize of a the empirical risk $N^{-1} \sum_{i=1}^N (1 - f(x_i))^2$ with a minimum sample size of order $10^{24}$ which is unfeasible with the current compute resources.

\newpage

\bibliography{sample}

\newpage
\appendix


\section{\RH~step by step}\label{app:rh_step_by_step}
There is a strong empirical evidence that \RH~holds. Recent numerical verification by \cite{platt_rh_true} showed that \RH~is  at least true in the region $\{z = a + i b \in \complex: a\in (0,1),b \in (0, \gamma]\}$ where $\gamma = 3\cdot 10^{12}$, meaning that all zeros of the zeta function with imaginary parts in $(0,\gamma]$ have a real part equal to $\frac{1}{2}$. Other theoretical insights seem to support \RH. For instance, french mathematician A. Denjoy gave the following probabilistic argument for \RH~(mentioned in \cite{edwards_1974}): if $(\mu(k))_{k\geq 0}$ is a sequence of independent Bernoulli random variables with values $+1$ or $-1$ (each with probability $1/2$), then for any $\varepsilon>0$, we have $\sum_{k \leq x} \mu(k) = \mathcal{O}_{x \to \infty}(x^{1/2 + \varepsilon})$ with probability $1$. This statement is closely related to \RH~as shown by British mathematician J.E. Littlewood in 1912 (mentioned in \cite{Titchmarsh_1986}). Indeed, Littlewood showed that \RH~is akin to having $\sum_{k\leq x} \mu(k) = \mathcal{O}_{x \to \infty}(x^{1/2+\varepsilon})$ for all $\varepsilon>0$, where $\mu$ is the M\"{o}bius function, a function with values in $\{-1,0,1\}$ that gives the parity of the number of prime factors in the prime decomposition of integers (see e.g. \cite{hardy_1938} for details about the M\"{o}bius function). Hence, with respect to Denjoy's argument, \RH~(informally) suggests that the sequence of M\"{o}bius function $\mu(x)$ behaves like a random walk. This provides a probabilistic interpretation of \RH~but it does not constitute a proof.
There are many consequences to \RH, and probably the most important of these is the distribution of prime numbers. Recent work by \cite{dudek_prime2015} showed that \RH~implies that for any real number $x \geq 2$, there exists a prime number $p_x$ such that $x - \frac{4}{\pi} \sqrt{x} \log(x) \leq p \leq x$. Another consequence of \RH~is with regards to the growth rate of the Mertens function defined by $M(x) = \sum_{k \leq x} \mu(k)$ where $\mu$ is the M\"{o}bius function; \RH~implies that the Mertens function satisfies $M(x) = \mathcal{O}_{x\to\infty}(x^{1 + \varepsilon})$ for all $\varepsilon > 0$. Such functions are ubiquitous in number theory and quantifying their growth rate has several applications. Another intriguing consequence of \RH~is given by the Nyman-Beurling criterion \citep{nyman, beurling} which states that \RH~holds if and only if a special class of functions is dense in $L_2(0,1)$. This class consists of neural networks with one dimensional input, and have a special parameterization.\\

The Riemann hypothesis conjectures that the non-trivial zeros of the Riemann zeta function are complex numbers with a real part $\frac{1}{2}$\footnote{Trivial zeros are negative even numbers, see below for more details.}. It is a long-standing open problem in number theory first formulated by \cite{riemann}. The Riemann zeta function is defined for complex numbers $z$ with a real part greater than 1 by 
\begin{equation}\label{eq:zeta_function}
\zeta(z) = \sum_{n=1}^\infty \frac{1}{n^z}, \quad z \in \mathbb{C}, \realpart(z)>1. 
\end{equation}
The above definition of Riemann zeta function excludes the region of interest $\{z \in \complex: \realpart(z)=\frac{1}{2}\}$ since the series in \cref{eq:zeta_function} diverge when $|z|<1$. Indeed, \RH~is stated for the \emph{analytic continuation} of the zeta function which is an extension of the zeta function on a larger set. As shown by Riemann, the function $\zeta$ extends to the whole complex plane $\complex$ while preserving some desirable properties such as analyticity. This extension is called the analytic continuation, and it is unique by the Identity theorem \citep{walz_identity}. Let us construct this analytic continuation of $\zeta$ step by step. 

\begin{enumerate}

\item \textbf{Extension to $\{ z \in \complex: \realpart(z)>0\}$. } 
Observe that the function $\zeta$ satisfies the following identity
$$
(1 - 2^{1-z})\zeta(z) = \sum_{n=1}^\infty \frac{1}{n^z} - 2 \sum_{n=1}^\infty \frac{1}{(2n)^z} =  \sum_{n=1}^\infty \frac{(-1)^{n+1}}{n^z},
$$
where the right hand side is defined for any complex number $z$ such that $\realpart(z)>0$. The series $\eta(z) = \sum_{n=1}^\infty \frac{(-1)^{n+1}}{n^z}$, known as the Dirichlet eta function, is defined for complex numbers $z$ satisfying $\realpart(z)>0$. Using this expression, we can extend the definition of $\zeta$ to the set $\{z \in \complex: \realpart(z)> 0, 2^{1-z} \neq 1\}$ by
\begin{equation}\label{eq:continuation_zeta}
 \zeta(z) = \frac{1}{1 - 2^{1-z}} \sum_{n=1}^\infty \frac{(-1)^{n+1}}{n^z}.   
\end{equation}

This definition extends the original domain of definition of $\zeta$ to all complex numbers such that $\realpart(z)>0$ except for those satisfying $2^{1-z}=1$, which are all of the form $z_n = 1 + i \frac{2 \pi n}{\log(2)}$, where $n$ is an non-zero integer.  Using classical properties of the Dirichlet eta function $\eta(z) = \sum_{n=1}^\infty \frac{(-1)^{n+1}}{n^z}$ \citep{borwein1995etafunction}, also known as the alternating zeta function, the zeta function $\zeta$ can be analytically continued to include the set $\{z_n , n \geq 1\}$ \citep{widder_1941, sondow_2003}. 

\item \textbf{Extension to $\{z \in \complex: \realpart(z)\leq 0\}$\textbackslash$\{0\}$. }
The Dirichlet function $\eta$ satisfies the following functional equation \citep{borwein1995etafunction}
\begin{equation}\label{eq:eta_functional_eq}
\eta(z) = 2 \frac{2^{z-1} - 1}{2^z - 1} \pi^{z-1} z\, \sin{\frac{\pi z}{2}} \Gamma(-z) \eta(1-z), 
\end{equation}
where $\Gamma$ is the Gamma function. Using \cref{eq:eta_functional_eq}, \cref{eq:continuation_zeta}, and the property of the Gamma function $\Gamma(z+1) = z \Gamma(z)$ for all complex numbers $z\in \complex$ ($\Gamma$ is the analytic continuation of the original Gamma function defined on $\{z\in \complex, \realpart(z)>0\}$ by $\Gamma(z) = \int_0^\infty t^{z-1} e^{-t} dt$), we obtain that for any $z\in \complex$ such that $\realpart(z) \in (0,1)$,
\begin{equation}\label{eq:func_eq1}
\zeta(z) = 2^z \pi^{z-1} \sin\left(\frac{\pi z}{2}\right) \Gamma(1-z) \zeta(1-z).
\end{equation}
Using the functional equation \cref{eq:func_eq}, we can extend the definition of $\zeta$ to the all the remaining complex numbers $z$ such that $\realpart(z) \leq 0$ and $z \neq 0$. It can further be shown that $\zeta$ can be continued to $0$ with $\zeta(0) = -1/2$ \citep{borwein1995etafunction}.

\end{enumerate}

\paragraph{Zeros of the $\zeta$ function.} From \cref{eq:func_eq}, the zeta function satisfies $\zeta(-2k) = 0$ for any integer $k \geq 1$. The negative even integers $\{-2 k \}_{k \geq 1 }$ are thus called \emph{trivial zeros} of the Riemann zeta function since the result follows from the simple fact that $\sin\left( - \pi k \right) = 0$ for all integers $k \geq 1$. The other zeros of $\zeta$ are called non-trivial zeros, and their properties remain largely misunderstood. The \RH~conjectures that they all lie on a the line $\realpart(z)=\frac{1}{2}$.

\paragraph{Riemann Hypothesis (\RH).} \emph{All non-trivial zeros of the Riemann zeta function $\zeta$ have a real part equal to $\frac{1}{2}$}.\\

Whether \RH~holds is still an open question. However, there have been a number of attempts to prove or disprove \RH~ in the literature. In the next section, we re-visit an old result that provides an analytic point of view of \RH.

\section{The Nyman-Beurling Criterion}\label{app:nyman_beurling}

\textbf{Theorem \ref{thm:nyman} }(\cite{nyman}).
\emph{The \RH~is true if and only if $\nn$ is dense in $L_2(I_1)$.}\\

\cite{beurling} later extended this result by showing that for any $p>1$, the $\zeta$ function has no zeroes in the set $\{ z \in \complex: \realpart(z)>1/p\}$ if and only if the set $\nn$ is dense in $L_p(I_1)$.\\

\noindent\textbf{Theorem \ref{thm:beurling}} (\cite{beurling})
\emph{The Riemann zeta function is free from zeros in the half plane $Re(z)>\frac{1}{p}, 1<p<\infty$, if and only if $\nn$ is dense in $L_p(I_1)$.}\\

The following is a sketch of the proof by \cite{beurling}. It helps understand the machinery of the proofs of \cref{thm:nyman} and \cref{thm:beurling} for the sufficient condition. 
\paragraph{Sketch of the proof. } The connection between \RH~ and the class $\nn$ is due to the following identity that relates the zeta function $\zeta$ to the fractional part function $\rho$\footnote{After multiple attempts, we could not find the original paper where this identity has first appeared. However, it has been mentioned in different works, e.g. \citep{nyman, beurling}.}
\begin{equation}\label{eq:zeta_rho}
\int_{0}^1 \rho\left(\frac{\theta}{x}\right) x^{z-1} dx = \frac{\theta}{z-1} - \frac{\theta^z \zeta(z)}{z}, \quad \forall z \in \complex, \realpart(z)>0, \theta \in I.
\end{equation}

Therefore, given a function $f \in \nn$, we obtain
\begin{equation}\label{eq:zeta_f}
\int_{0}^1 f(x) x^{z-1} dx = - \frac{ \zeta(z) \sum_{i=1}^m c_i \beta_i^{z-1}}{z}, \quad \forall z \in \complex, \realpart(z)>0.
\end{equation}
Now fix $p>1$ and assume that the class $\nn$ is dense in $L_p(I)$. Therefore, given $\varepsilon >0$, there exists a function $f \in \nn$ such that $\| \mathbf{1} - f\| < \varepsilon$, where $\mathbf{1}$ is the constant function on $I$ equal to $1$ everywhere. Using \cref{eq:zeta_f}, we obtain
$$
\int_{0}^1 (1 - f(x)) x^{z-1} dx = \frac{1}{z}\left(1 + \zeta(z) \sum_{i=1}^m c_i \beta_i^{z-1}\right), \quad \forall z \in \complex, \realpart(z)>0.
$$
Using H\"{o}lder's inequality, we have for any $z$ such that $\realpart(z)>1/p$
$$
\|\mathbf{1} - f\|_p \| x^{z-1}\|_q  < \varepsilon \frac{1}{(q(\realpart(z) - 1/p))^{1/q}},
$$
where $q>1$ satisfies $1/p + 1/q = 1$. This yields,
$$
|1 - \zeta(z) \sum_{i=1}^m c_i \beta_i^z|^q < \frac{\varepsilon^q |z|^q}{ q ( \realpart(z) - 1/p)}.
$$

Hence, in the region $\{ z \in \complex, \realpart(z)>1/p + \varepsilon^q |z|^q / q\}$,  $\zeta(z)$ cannot be equal to zero. Since $\varepsilon$ is arbitrarily chosen, we conclude that $\zeta(z)\neq 0$ if $\realpart(z)>1/p$.\\
For the necessary condition, We invite the reader to check \cite{beurling}. 

The identity \eqref{eq:zeta_rho} is the nub of the proof above. It provides an integral representation of the zeta function $\zeta$ in terms of the fractional part function $\rho$. Hence, one would expect that some properties of zeta function should in-principle be reflected on some function classes involving the function $\rho$. This is precisely the idea behind Theorems \ref{thm:nyman} and \ref{thm:beurling}. 
More importantly, from the analysis above, we have the following analytic criterion for zero-free regions of the zeta function.

\textbf{Lemma \ref{lemma:zero_free_region_1} }(Nyman-Beurling zero-free regions)
\emph{Let $f \in \nn$ and $\delta = \|1 - f\|_2$ be the distance between the constant function $1$ on $I$ and $f$. Then, the region $\{z \in \complex, \realpart(z) > \frac{1}{2} \left(1 + \delta^2 |z|^2\right)\}$ is free of zeroes of the Riemann zeta function $\zeta$.}

Existing empirical verification studies of the \RH~use different analytic criteria to locate the zeroes. To the best of our knowledge, the most recent verification study was conducted by \cite{platt_rh_true} where the authors have found that in the region $\{a + i b: a\in (0,1), b \in (0, \gamma], \gamma \approx 3 \cdot 10^{12}\}$, the \RH~is satisfied, meaning that all the zeroes of $\zeta$ are on the line $\realpart(z)=1/2$. Is it possible to use Lemma \ref{lemma:zero_free_region_1} to beat this record? first, notice that unless the function $f$ is simple, the norm $\|1 - f\|_2$ is generally intractable, an can only be approximated with Monte-Carlo sampling. See \cref{app:empirical_implications} for more details. 

The result of Lemma \ref{lemma:zero_free_region_1} (and that of \cref{thm:beurling}) is stated for the function class $\nn$ which consists of a special neural architecture with one-dimensional inputs. \emph{Can we generalize this to multi-dimensional inputs?} In the next section, we answer this question positively by introducing a generalized class of neural networks with multi-dimensional inputs.

\section{A sufficient condition in the multi-dimensional case}\label{app:sufficient_condition}

Let $d \geq 1$ and consider the following class of neural networks with inputs in $I_d$,

$$
\nn_d = \{ f(x) = \sum_{j=1}^d \sum_{i=1}^m c_{i,j} \rho\left(\frac{\beta_{i,j}}{x_j}\right), x \in I^d: m \geq 1, c \in \reals^{d\times m}, \beta \in I_{d \times m}, c^T\beta = 0 \},
$$
where $c = (c_{1,1}, c_{2,1}, \dots, c_{m,1}, \dots, c_{m, d}) \in \reals^{m\times d}$ is the flattened vector of $(c_{.,j})_{1 \leq j \leq d}$. Notice that we recover the Nyman-Beurling class when $d=1$. Using this class, we can generalize the zero-free region result given by \cref{lemma:zero_free_region_1} to a multi-dimensional setting.\\

\noindent\textbf{Lemma \ref{lemma:zero_free_region_d} }[zero-free regions for general $d \geq 1$]\\
\emph{Let $d\geq 1$ and $f \in \nn_d$. Let $\delta = \|1 - f\|_2$ be the distance between the constant function $1$ on $I_d$ and $f$. Then, the region $\{z \in \complex, \realpart(z) > \frac{1}{2} \left(1 + \delta^{\frac{2}{d}} |z|^2\right)\}$ is free of zeroes of the Riemann zeta function $\zeta$.}

\begin{proof}
Let $d\geq 1$ and $f \in \nn_d$. Using \cref{eq:zeta_rho}, namely  $\int_{0}^1 \rho\left(\frac{\alpha}{x}\right) x^{z-1} dx = \frac{\alpha}{z-1} - \frac{\alpha^z \zeta(z)}{z}$ for $\alpha \in (0,1), \realpart(z)>1/2$, we have that 
\begin{align*}
\int_{I_d} f(x) \prod_{j=1}^d x_j^{z-1} dx &= z^{-d + 1 } \sum_{j=1}^d \sum_{i=1}^{m} c_{i,j} \left( \frac{\beta_{i,j}}{z-1} - \frac{\beta_{i,j}^z \zeta(z)}{z}\right) \\
&= - z^{-d} \sum_{j=1}^d \sum_{i=1}^{m} c_{i,j} \beta_{i,j}^z \zeta(z),
\end{align*}
where we have used the fact that $c^T \beta = 0$.

Therefore, we have
\begin{equation}\label{eq:identity_necessary}
\int_{I_d} (1 - f(x)) \prod_{j=1}^d x_j^{z-1} dx = z^{-d} (1  + \zeta(z) \sum_{j=1}^d \sum_{i=1}^{m} c_{i,j} \beta_{i,j}^z).    
\end{equation}

Using Cauchy-Schwartz inequality, we obtain 

$$
\left|1   + \zeta(z) \sum_{j=1}^d \sum_{i=1}^{m} c_{i,j} \beta_{i,j}^z\right|^2 \leq \|1- f\|^2 \frac{|z|^{2d}}{(2 \realpart(z) - 1)^d},
$$
where we have used the fact that $\left\|\prod_{j=1}^d x_j^{z-1}\right\|^2_2 = (2 \realpart(z) - 1)^{-d}$. Therefore, for all complex numbers $z$ satisfying $\realpart(z) > \frac{1}{2} + \frac{\delta^{2/d} |z|^2}{2}$, we have $\zeta(z) \neq 0$. This is true since we can choose $f$ such that $\|1 - f\|$ is arbitrarily close to $\delta$.
\end{proof}
Notice that if $\delta$ can be chosen arbitrarily small, then the zero-free region in \cref{lemma:zero_free_region_d} can be extended to the whole half-plane $\{\realpart(z) > 1/2\}$. This is a generalization of the sufficient condition of \cref{thm:beurling} in the multi-dimensional case.

\end{document}